\newcommand{\cev}[1]
{\reflectbox{\ensuremath{\vec{\reflectbox{\ensuremath{#1}}}}}}
\newcommand{\SYS}{DeepProbe\xspace}
\DeclareMathOperator*{\argmax}{arg\,max}
\begin{document}

\title[\SYS: Information Directed Sequence Understanding via RNN]{\SYS: Information Directed Sequence Understanding and Chatbot Design via Recurrent Neural Networks}
\iffalse
\author{Anonymous, Anonymous, Anonymous}
\else
\author{Zi Yin}
\authornote{This work was completed during the first author's internship at Microsoft.}
\affiliation{%
       \institution{Stanford University}
       \streetaddress{350 Serra Mall}
       \city{Stanford}
       \state{CA}
       \postcode{94305}
}
\email{zyin@stanford.edu}

\author{Keng-hao Chang}
\affiliation{%
       \institution{Microsoft}
       \streetaddress{1020 Enterprise way}
       \city{Sunnyvale}
       \state{CA}
       \postcode{94084}
}
\email{kenchan@microsoft.com}

\author{Ruofei Zhang}
\affiliation{%
       \institution{Microsoft}
       \streetaddress{1020 Enterprise way}
       \city{Sunnyvale}
       \state{CA}
       \postcode{94084}
}
\email{bzhang@microsoft.com}
\fi

\begin{abstract}\label{abstract}

Information extraction and user intention identification are central topics in modern query understanding and recommendation systems. In this paper, we propose \SYS, a generic information-directed interaction framework which is built around an attention-based sequence to sequence (seq2seq) recurrent neural network. \SYS can rephrase, evaluate, and even actively ask questions, leveraging the generative ability and likelihood estimation made possible by seq2seq models. \SYS makes decisions based on a derived uncertainty (entropy) measure conditioned on user inputs, possibly with multiple rounds of interactions. Three applications, namely a rewritter, a relevance scorer and a chatbot for ad recommendation, were built around \SYS, with the first two serving as precursory building blocks for the third. We first use the seq2seq model in \SYS to rewrite a user query into one of standard query form, which is submitted to an ordinary recommendation system. Secondly, we evaluate \SYS's seq2seq model-based relevance scoring. Finally, we build a chatbot prototype capable of making active user interactions,  which can ask questions that maximize information gain, allowing for a more efficient user intention idenfication process. We evaluate first two applications by 1) comparing with baselines by BLEU and AUC, and 2) human judge evaluation. Both demonstrate significant improvements compared with current state-of-the-art systems, proving their values as useful tools on their own, and at the same time laying a good foundation for the ongoing chatbot application.

\end{abstract}

\keywords{Deep Learning, RNN, Seq2Seq, ChatBot, Recommendation system, Attention Mechanism, Online Advertising, Sponsored Search, Query Rewriting, Probabilistic Scoring, Information Gain}

\maketitle

\section{Introduction}

Recent years have witnessed a boom in deep learning, which revolutionizes areas including computer vision, speech recognition and natural language processing. One widely-used deep learning model is the sequence to sequence (seq2seq) model, which demonstrated their power in machine translation \cite{sutskever2014sequence}, achieving higher BLEU score than conventional methods like phrased-based statistical machine translation models. 

Different bells and whistles have been developed which further boost the performance of seq2seq models, one prominent example of which is the attention mechanism. In \cite{luong-pham-manning:2015:EMNLP}, the authors proposed this mechanism, which augments the top hidden vector at the decoder side with a weighted average of the encoder hidden vectors. The weights can be calculated through a cosine similarity or a generalized matrix inner product, where the weight matrix is part of the parameters to be learnt. By adding attention to the deep seq2seq model, the authors were able to better align inputs and outputs, and subsequently achieve an additional 5.0 improvement in BLEU score. 

On top of natural languages, seq2seq models can be trained with literally any kind of paired sequence data.  Authors in \cite{DBLP:journals/corr/VinyalsL15} build a model with IT helpdesk question-answer conversation log so that it will read new user questions and respond ``machine-translated'' answers. Authors in \cite{45189} take email correspondence log to build a model that suggests email reply candidates for users to choose from in mobile environment. Reflecting that a question-answering system would need a knowledge base to search answers from, we propose a staged approach that can leverage existing recommend system as is, serving as the knowledge base to search for the right answer. 
We apply the seq2seq model to understand user questions, using it to rewrite the question to one in standard query form that an ordinary recommendation system would understand. The rewrite is submitted to the recommendation system to retrieve a set of candidate answers. We show that with attention mechanism the model can rewrite questions with better quality measured by BLEU score. We also show that those rewrites can retrieve ads from a commercial search engine with better human labeled quality, proving that the system has significant commercial values. 


Another powerful aspect of the seq2seq model besides its generativeness, namely its statistical property as likelihood estimators, have not been fully investigated by previous work. We built a seq2seq likelihood estimator in \SYS, which serves as the central model for an information directed evaluation and interaction framework. When used as a evaluation tool, a posterior probability derived from the seq2seq likelihood estimator will be calculated which serves as a relevance criterion. We can use it to refine candidates returned by a recommendation system. By comparing it against existing baselines like CDSSM \cite{shen2014learning}, we find significant performance improvement evaluated by AUC on a manually labeled dataset. When used as a interactive tool, the seq2seq model steers an agent through the user interaction process. An agent like a chatbot, tries to identify the intent of a user at a interactive session. The agent, using the seq2seq estimator, calculates the conditional entropy through a Naive Bayes procedure, which will be updated every time new information comes, i.e. a new user input. 
The agent iteratively uses the information to make a decision to either make a recommendation or ask further questions to gather more information. 
We build a chatbot prototype using this framework. The prototype is built on a commercial search engine which recommends product ads. The chatbot will recommend an ad if a user asks questions with product intent. When the user intention is not clear, it actively asks the user by formulating questions around product attributes that maximize the expected information gain.

The contribution of the paper is summarized as follows. We introduce \SYS, an information-directed interaction framework built upon a seq2seq model. We propose and implement a practical way to answer user questions in a staged approach: (1) we apply seq2seq model to understand and rewrite user questions into one that an ordinary recommendation system can understand and return candidates, (2) we use seq2seq model to score and pick better candidates, and finally (3) we use seq2seq to derive confidence measure and probe users for clarification if necessary.

\section{Models}\label{model}

\subsection{Deep Multi-layer Seq2Seq Attention Model}\label{sec:seq2seq}
We use a seq2seq neural network enhanced with attention mechanism, which is illustrated in Figure \ref{fig:seq2seq}.
A seq2seq model is comprised of an encoder and a decoder, each consisting of several vertically stacked layers. Below we give a detailed explanation.

\subsubsection{Embedding Layer}
The embedding layer takes a word and converts it to its vector representation. The parameter required for this layer is a matrix $W_{emb}\in \mathbb R^{d_{emb}\times |\mathcal V|}$.  Specifically, when a word with index $i$ is given to the embedding layer, it produces $W_{\cdot, i}$, the $i$-th column of the matrix, which is a dimension $d_{emb}$ vector. We learn separate embedding layers and parameters for the encoder and decoder, i.e. two $W_{emb}$ matrices.

\subsubsection{Variable-depth LSTM Recurrent Layers}
The LSTM recurrent layer with depth $l$ consists of $l$ vertically stacked LSTM blocks. Each LSTM block takes three inputs: $e_t$, $c_{t-1}$ and $h_{t-1}$, where $e_t$ is the input from below, $c_{t-1}$ and $h_{t-1}$ are inputs from the previous step. Its output, $h_t$, is computed in the following way:
\begin{align*}
i_t&=\sigma (W_{ei}e_t+W_{hi}h_{t-1}+b_i)\\
f_t&=\sigma (W_{ef}e_t+W_{hf}h_{t-1}+b_f)\\
c_t&=f_t \cdot c_{t-1}+i_t \cdot \tanh(W_{ec}e_t+W_{hc}h_{t-1}+b_c)\\
o_t&=\sigma(W_{eo}e_t+W_{ho}h_{t-1+}b_o)\\
h_t&=o_t\cdot\tanh(c_t)
\end{align*}
where $\cdot$ denotes the element-wise product between vectors. LSTM is an enhanced recurrent neural network (RNN) that addresses short-term memory issue of a vanilla RNN, by maintaining additional cell vector $c_t$ and introducing input gate $i_t$, forget gate $f_t$, and output gate $o_t$. Detailed discussions of the advantages of LSTM can be found in \cite{lstm}, which is omitted in this paper due to the space limit. 
For the lowest LSTM layer, $e_t$ is the output of embedding layer with dimension $d_{emb}$, so $W_{e*}  \in R^{d_{h} \times d_{emb}}$,  $W_{h*} \in R^{d_{h} \times d_{h}}$, and $b_* \in R^{d_{h}}$ are the parameters to be learned. For the upper LSTM layers, $W_{e*}, W_{h*} \in R^{d_{h} \times d_{h}}$, and $b_* \in R^{d_{h}}$ are the parameters to be learned.
$\sigma(\cdot)$ here denotes sigmoid, a nonlinear activation function. For the encoder, each LSTM block is in fact bi-directional (BLSTM), it outputs concatenated hidden vectors from forward and backward directions, for which the final vector fed to decoder is from the last of both directions in concatenation form: $[\overrightarrow{h}_m;\overleftarrow{h}_{-1}]$.
For the decoder, each LSTM block has only forward direction, so readers should interpret $d_h$ accordingly, e.g. $d_h$ in decoder should be twice the size of that in encoder. In encoder each LSTM layer other than the lowest should reduce the input size by half so after concatenation the final output size of each BLSTM layer is the same.

\subsubsection{Attention Layer}

For every top hidden vector of the decoder, we augment it with an attention vector, $g_t$, which is obtained by combining the top hidden vectors from the encoder. The attention mechanism will be discussed in section \ref{sec:atten}. After concatenating the attention vector $g_t$ with the output vector of the top LSTM layer $h_t$, we apply a fully connected layer to reduce the dimension back to the same size as the input hidden vector:
$ 
\hat{h}_t = Relu(W_c[g_t;h_t]+b_c)
$, 
where for $Relu$ is the rectified nonlinearity unit, $max(0, \cdot)$. Here the parameters are $W_c \in R^{d_{h} \times 2d_{h}}$ and $b_c \in R^{d_{h}}$.
The output $\hat{h}_t$ will be passed to the next layer.


\subsubsection{Projection Layer}
The projection layer takes the combined hidden and attention vector as input, and outputs a vector of dimension $|\mathcal V|$. Its parameters include a weight matrix $W_p\in\mathbb R^{|\mathbb V| \times d_h}$ and a bias vector $b_p\in \mathbb R^{|\mathcal V|}$. The output at step $t$ is computed as
$v_t=\text{softmax}(W_p\hat{h}_t+b_p)$.
Note $v$ is a non-negative vector which sums up to 1, hence it can be viewed as a distribution on the vocabulary $\mathcal V$. The likelihood of seeing a specific word with index $w_t$ is the $w_t$-th element of $v_t$, which is abbreviated as 


\begin{equation}
v_t(w_t)
\label{eq:word_likelihood}
\end{equation}

\subsubsection{Loss Function}
We perform end-to-end training to learn all the aforementioned parameters together. 
For each pair of a source sequence $Src$ and a target sequence $Tgt$ in the training set, where $Tgt=w_{t_1}...w_{t_n}$, by first encoding $Src$ through encoder, the loss of this pair is a summation of per-word cross-entropy loss between $v_i$ and the label which is a one-hot indicator vector of each word $w_{t_i}$.

\begin{figure*}
\centering
\resizebox{0.8\linewidth}{!}{
\begin{tikzpicture}

\node (x_t) {$\text{word}_t$};
\node[draw, align=center, inner sep=1] (embed) [above= 5mm of x_t] {Embedding\\ Layer};
\node[draw, align=center, inner sep=1] (unit_t1) [above= 5mm of embed] {LSTM \\ Layer 1};
\node[draw, align=center, inner sep=1] (unit_t2) [above= 5mm of unit_t1] {LSTM \\ Layer 2};
\node (dots) [above= 5mm of unit_t2]{$\cdots$};
\node[draw, align=center, inner sep=1] (unit_tl) [above= 5mm of dots] {LSTM \\ Layer $l$};
\node[align=center, inner sep=1] (unit_tprev1) [left= 10mm of unit_t1.north] {$\vec h_{t-1}^{(1)}$};
\node[align=center, inner sep=1] (unit_tnext1) [right= 10mm of unit_t1.north] {$\vec h_{t}^{(1)}$};
\node[align=center, inner sep=1] (unit_tprev2) [left= 10mm of unit_t2.north] {$\vec h_{t-1}^{(2)}$};
\node[align=center, inner sep=1] (unit_tnext2) [right= 10mm of unit_t2.north] {$\vec h_{t}^{(2)}$};
\node[align=center, inner sep=1] (unit_tprevl) [left= 10mm of unit_tl.north] {$\vec h_{t-1}^{(l)}$};
\node[align=center, inner sep=1] (unit_tnextl) [right= 10mm of unit_tl.north] {$\vec h_{t}^{(l)}$};

\node (x_t-1) [left= 29mm of x_t]  {$\text{word}_{0}$};
\node[draw, align=center, inner sep=1] (embed-1) [above= 5mm of x_t-1] {Embedding\\ Layer};
\node[draw, align=center, inner sep=1] (unit_t1-1) [above= 5mm of embed-1] {LSTM \\ Layer 1};
\node[draw, align=center, inner sep=1] (unit_t2-1) [above= 5mm of unit_t1-1] {LSTM \\ Layer 2};
\node (dots-1) [above= 5mm of unit_t2-1]{$\cdots$};
\node[draw, align=center, inner sep=1] (unit_tl-1) [above= 5mm of dots-1] {LSTM \\ Layer $l$};
\path[->] (x_t-1) edge (embed-1);
\path[->] (embed-1) edge (unit_t1-1);
\path[->] (unit_t1-1) edge (unit_t2-1);
\path[->] (unit_t2-1) edge (dots-1);
\path[->] (dots-1) edge (unit_tl-1);

\node[align=center, inner sep=1] (bunit_tprev1) [left= 10mm of unit_t1.south] {$\cev h_{t-1}^{(1)}$};
\node[align=center, inner sep=1] (bunit_tnext1) [right= 10mm of unit_t1.south] {$\cev h_{t}^{(1)}$};
\node[align=center, inner sep=1] (bunit_tprev2) [left= 10mm of unit_t2.south] {$\cev h_{t-1}^{(2)}$};
\node[align=center, inner sep=1] (bunit_tnext2) [right= 10mm of unit_t2.south] {$\cev h_{t}^{(2)}$};
\node[align=center, inner sep=1] (bunit_tprevl) [left= 10mm of unit_tl.south] {$\cev h_{t-1}^{(l)}$};
\node[align=center, inner sep=1] (bunit_tnextl) [right= 10mm of unit_tl.south] {$\cev h_{t}^{(l)}$};

\node (ldots1) [left= 5mm of unit_tprev1]{$\cdots$};
\node (ldots2) [left= 5mm of unit_tprev2]{$\cdots$};
\node (ldotsl) [left= 5mm of unit_tprevl]{$\cdots$};
\node (rdots1) [left= 5mm of bunit_tprev1]{$\cdots$};
\node (rdots2) [left= 5mm of bunit_tprev2]{$\cdots$};
\node (rdotsl) [left= 5mm of bunit_tprevl]{$\cdots$};

\node (x_t+1) [right= 15mm of x_t]  {$\text{word}_{t+1}$};
\node[draw, align=center, inner sep=1] (embed+1) [above= 5mm of x_t+1] {Embedding\\ Layer};
\node[draw, align=center, inner sep=1] (unit_t1+1) [above= 5mm of embed+1] {LSTM \\ Layer 1};
\node[draw, align=center, inner sep=1] (unit_t2+1) [above= 5mm of unit_t1+1] {LSTM \\ Layer 2};
\node (dots+1) [above= 5mm of unit_t2+1]{$\cdots$};
\node[draw, align=center, inner sep=1] (unit_tl+1) [above= 5mm of dots+1] {LSTM \\ Layer $l$};

\path[->] (x_t) edge (embed);
\path[->] (embed) edge (unit_t1);
\path[->] (unit_t1) edge (unit_t2);
\path[->] (unit_t2) edge (dots);
\path[->] (unit_tprev1) edge (unit_t1.148);
\path[->] (unit_t1.32) edge (unit_tnext1);
\path[->] (unit_tprev2) edge (unit_t2.148);
\path[->] (unit_t2.32) edge (unit_tnext2);
\path[->] (dots) edge (unit_tl);
\path[->] (unit_tprevl) edge (unit_tl.148);
\path[->] (unit_tl.32) edge (unit_tnextl);

\path[->] (x_t+1) edge (embed+1);
\path[->] (embed+1) edge (unit_t1+1);
\path[->] (unit_t1+1) edge (unit_t2+1);
\path[->] (unit_t2+1) edge (dots+1);
\path[->] (unit_tnext1) edge (unit_t1+1.148);
\path[->] (unit_tnext2) edge (unit_t2+1.148);
\path[->] (unit_tnextl) edge (unit_tl+1.148);
\path[->] (dots+1) edge (unit_tl+1);

\path[<-] (bunit_tprev1) edge (unit_t1.212);
\path[<-] (bunit_tprev2) edge (unit_t2.212);
\path[<-] (bunit_tprevl) edge (unit_tl.212);
\path[<-] (bunit_tnext1) edge (unit_t1+1.212);
\path[<-] (bunit_tnext2) edge (unit_t2+1.212);
\path[<-] (bunit_tnextl) edge (unit_tl+1.212);
\path[->] (bunit_tnext1) edge (unit_t1.328);
\path[->] (bunit_tnext2) edge (unit_t2.328);
\path[->] (bunit_tnextl) edge (unit_tl.328);

\path[->] (ldots1) edge (unit_tprev1);
\path[->] (ldots2) edge (unit_tprev2);
\path[->] (ldotsl) edge (unit_tprevl);
\path[<-] (rdots1) edge (bunit_tprev1);
\path[<-] (rdots2) edge (bunit_tprev2);
\path[<-] (rdotsl) edge (bunit_tprevl);

\path[<-] (ldots1) edge (unit_t1-1.32);
\path[<-] (ldots2) edge (unit_t2-1.32);
\path[<-] (ldotsl) edge (unit_tl-1.32);
\path[->] (rdots1) edge (unit_t1-1.328);
\path[->] (rdots2) edge (unit_t2-1.328);
\path[->] (rdotsl) edge (unit_tl-1.328);

\node[align=center, inner sep=1] (unit_t1+2) [right= 10mm of unit_t1+1.north] {$\vec h_{t+1}^{(1)}$};
\node[align=center, inner sep=1] (unit_t2+2) [right= 10mm of unit_t2+1.north] {$\vec h_{t+1}^{(2)}$};
\node[align=center, inner sep=1] (unit_tl+2) [right= 10mm of unit_tl+1.north] {$\vec h_{t+1}^{(l)}$};
\node[align=center, inner sep=1] (bunit_t1+2) [right= 10mm of unit_t1+1.south] {$\cev h_{t+1}^{(1)}$};
\node[align=center, inner sep=1] (bunit_t2+2) [right= 10mm of unit_t2+1.south] {$\cev h_{t+1}^{(2)}$};
\node[align=center, inner sep=1] (bunit_tl+2) [right= 10mm of unit_tl+1.south] {$\cev h_{t+1}^{(l)}$};

\path[->] (unit_t1+1.32) edge (unit_t1+2);
\path[->] (unit_t2+1.32) edge (unit_t2+2);
\path[->] (unit_tl+1.32) edge (unit_tl+2);
\path[<-] (unit_t1+1.328) edge (bunit_t1+2);
\path[<-] (unit_t2+1.328) edge (bunit_t2+2);
\path[<-] (unit_tl+1.328) edge (bunit_tl+2);

\node[align=center, inner sep=1] (dots1+2) [right= 5mm of unit_t1+2] {$\cdots$};
\node[align=center, inner sep=1] (dots2+2) [right= 5mm of unit_t2+2] {$\cdots$};
\node[align=center, inner sep=1] (dotsl+2) [right= 5mm of unit_tl+2] {$\cdots$};
\node[align=center, inner sep=1] (bdots1+2) [right= 5mm of bunit_t1+2] {$\cdots$};
\node[align=center, inner sep=1] (bdots2+2) [right= 5mm of bunit_t2+2] {$\cdots$};
\node[align=center, inner sep=1] (bdotsl+2) [right= 5mm of bunit_tl+2] {$\cdots$};

\path[->] (unit_t1+2) edge (dots1+2);
\path[->] (unit_t2+2) edge (dots2+2);
\path[->] (unit_tl+2) edge (dotsl+2);
\path[<-] (bunit_t1+2) edge (bdots1+2);
\path[<-] (bunit_t2+2) edge (bdots2+2);
\path[<-] (bunit_tl+2) edge (bdotsl+2);


\node (y_t)[right=45mm of  x_t+1]{$\langle\text{EOS}\rangle$};
\node[draw, align=center, inner sep=1] (dembed) [above= 5mm of y_t] {Embedding\\ Layer};
\node[draw, align=center, inner sep=1] (dunit_t1) [above= 5mm of dembed] {LSTM \\ Layer 1};
\node[draw, align=center, inner sep=1] (dunit_t2) [above= 5mm of dunit_t1] {LSTM \\ Layer 2};
\node (ddots) [above= 5mm of dunit_t2]{$\cdots$};
\node[draw, align=center, inner sep=1] (dunit_tl) [above= 5mm of ddots] {LSTM \\ Layer $l$};
\node[draw, align=center, inner sep=1] (datten) [above= 5mm of dunit_tl] {Attention \\Layer};
\node[draw, align=center, inner sep=1] (dproj) [above= 5mm of datten] {Projection \\Layer};

\node[align=center, inner sep=1] (dunit_tprev1) [left= 5mm of dunit_t1] {${ 
    \begin{bmatrix}
		   \vec h_{m}^{(1)}\\
           \cev h_{-1}^{(1)}
         \end{bmatrix}}$};
\node[align=center, inner sep=1] (dunit_tnext1) [right= 5mm of dunit_t1] {$h_{1}^{(1)}$};
\node[align=center, inner sep=1] (dunit_tprev2) [left= 5mm of dunit_t2] {${ 
    \begin{bmatrix}
		   \vec h_{m}^{(2)}\\
           \cev h_{-1}^{(2)}
         \end{bmatrix}}$};
\node[align=center, inner sep=1] (dunit_tnext2) [right= 5mm of dunit_t2] {$h_{1}^{(2)}$};
\node[align=center, inner sep=1] (dunit_tprevl) [left= 5mm of dunit_tl] {${ 
    \begin{bmatrix}
		   \vec h_{m}^{(l)}\\
           \cev h_{-1}^{(l)}
         \end{bmatrix}}$};
\node[align=center, inner sep=1] (dunit_tnextl) [right= 5mm of dunit_tl] {$h_{1}^{(l)}$};

\node (y_t+1) [right= 15mm of y_t]  {$\text{word}_{1}$};
\node[draw, align=center, inner sep=1] (dembed+1) [above= 5mm of y_t+1] {Embedding\\ Layer};
\node[draw, align=center, inner sep=1] (dunit_t1+1) [above= 5mm of dembed+1] {LSTM \\ Layer 1};
\node[draw, align=center, inner sep=1] (dunit_t2+1) [above= 5mm of dunit_t1+1] {LSTM \\ Layer 2};
\node (ddots+1) [above= 5mm of dunit_t2+1]{$\cdots$};
\node[draw, align=center, inner sep=1] (dunit_tl+1) [above= 5mm of ddots+1] {LSTM \\ Layer $l$};
\node[draw, align=center, inner sep=1] (datten+1) [above= 5mm of dunit_tl+1] {Attention \\Layer};
\node[draw, align=center, inner sep=1] (dproj+1) [above= 5mm of datten+1] {Projection \\Layer};

\path[->] (y_t) edge (dembed);
\path[->] (dembed) edge (dunit_t1);
\path[->] (dunit_t1) edge (dunit_t2);
\path[->] (dunit_t2) edge (ddots);
\path[->] (dunit_tprev1) edge (dunit_t1);
\path[->] (dunit_t1) edge (dunit_tnext1);
\path[->] (dunit_tprev2) edge (dunit_t2);
\path[->] (dunit_t2) edge (dunit_tnext2);
\path[->] (ddots) edge (dunit_tl);
\path[->] (dunit_tprevl) edge (dunit_tl);
\path[->] (dunit_tl) edge (dunit_tnextl);
\path[->] (dunit_tl) edge (datten);
\path[->] (datten) edge (dproj);

\path[->] (y_t+1) edge (dembed+1);
\path[->] (dembed+1) edge (dunit_t1+1);
\path[->] (dunit_t1+1) edge (dunit_t2+1);
\path[->] (dunit_t2+1) edge (ddots+1);
\path[->] (dunit_tnext1) edge (dunit_t1+1);
\path[->] (dunit_tnext2) edge (dunit_t2+1);
\path[->] (dunit_tnextl) edge (dunit_tl+1);
\path[->] (ddots+1) edge (dunit_tl+1);
\path[->] (dunit_tl+1) edge (datten+1);
\path[->] (datten+1) edge (dproj+1);

\node[align=center, inner sep=1] (dunit_t1+2) [right= 5mm of dunit_t1+1] {$h_{2}^{(1)}$};
\node[align=center, inner sep=1] (dunit_t2+2) [right= 5mm of dunit_t2+1] {$h_{2}^{(2)}$};
\node[align=center, inner sep=1] (dunit_tl+2) [right= 5mm of dunit_tl+1] {$h_{2}^{(l)}$};
\node[align=center, inner sep=1] (dsoftmax) [above= 5mm of dproj] {$v_1$};
\node[align=center, inner sep=1] (dsoftmax+1) [above= 5mm of dproj+1] {$v_2$};

\path[->] (dunit_t1+1) edge (dunit_t1+2);
\path[->] (dunit_t2+1) edge (dunit_t2+2);
\path[->] (dunit_tl+1) edge (dunit_tl+2);

\path[->] (unit_tl+1.north)[bend left=15] edge (datten);
\path[->] (unit_tl.north)[bend left=10] edge (datten);
\path[->] (unit_tl+1.north)[bend left=25] edge (datten+1);
\path[->] (unit_tl.north)[bend left=20] edge (datten+1);
\path[->] (unit_tl-1.north)[bend left=10] edge (datten);
\path[->] (unit_tl-1.north)[bend left=20] edge (datten+1);
\path[->] (dproj) edge (dsoftmax);
\path[->] (dproj+1) edge (dsoftmax+1);

\node (rdots1) [right= 5mm of dunit_t1+2]{$\cdots$};
\node (rdots2) [right= 5mm of dunit_t2+2]{$\cdots$};
\node (rdotsl) [right= 5mm of dunit_tl+2]{$\cdots$};
\path[->] ( dunit_t1+2) edge (rdots1);
\path[->] ( dunit_t2+2) edge (rdots2);
\path[->] ( dunit_tl+2) edge (rdotsl);

\draw [decorate,decoration={brace,mirror,amplitude=10pt},xshift=-4pt,yshift=0pt] (-4,-0.2) -- (5,-0.2) node [black,midway,yshift=-15pt] {Encoder};

\draw [decorate,decoration={brace,mirror,amplitude=10pt},xshift=-4pt,yshift=0pt] (7,-0.2) -- (14,-0.2) node [black,midway,yshift=-15pt] {Decoder};

\end{tikzpicture}
}
\caption{Bi-directional Multilayer LSTM Encoder + LSTM Attention Decoder}
\label{fig:seq2seq}

\end{figure*}
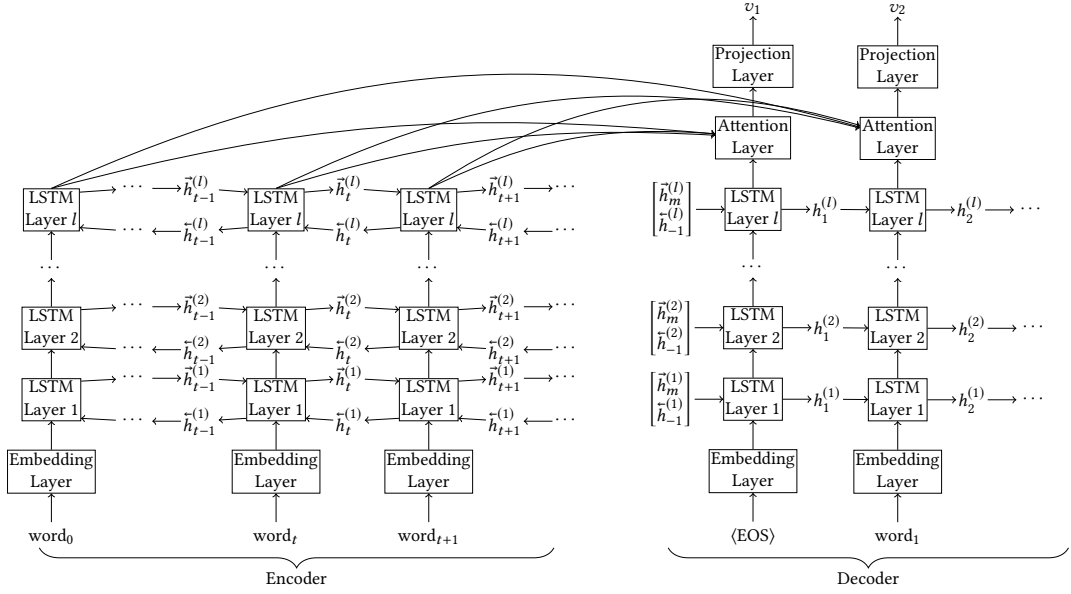

\subsection{Attention Mechanisms}\label{sec:atten}
Attention mechanism is a powerful add-on to recurrent neural networks that is intended to combat the long-term dependency issue. Even LSTM and GRU networks, which are designed to have long term dependencies, are prone to missing information that occurred long time ago. 
The intuition behind attention mechanism is that, at each step of the sequence decoding process, we force the network to look back again at the source sequence to pick up the most relevant hidden vectors, and augment the current hidden vector with this extra piece of information. 

To introduce the attention mechanism, we define a few notations for convenience purposes. Let the top hidden vectors for the source sequence be $s_1,\cdots,s_m$, and the top hidden vector of the current target word be $h_i$. We summarize four variants that we intend to experiment \cite{luong-pham-manning:2015:EMNLP}\cite{NIPS2013_5028}.
These methods are different ways of averaging source hidden vectors $s_1,\cdots,s_m$, where the important words are supposed to have a larger weight, hence it gets the name of ``attention''. For the four mechanisms, the weight vector $a_{i\cdot}$ for target word $i$ can be calculated respectively firstly:

\begin{enumerate}
\item  {\it (dot)} $\tilde{a}_{ij}=s_j^T h_i$
\item  {\it (general)} $\tilde{a}_{ij}={s_j}^T W_g h_i$
\item  {\it (concat)} $\tilde{a}_{ij}=W_{cc} [s_j; h_i]$
\item 
{\it (tensor)} $\tilde{a}_{ij} = U({s_j}^T W h_i+ V [s_j; h_i] + b)$
\end{enumerate}
Then with $a_{i,\cdot} = \text{softmax}(\tilde{a}_{i\cdot})$, the attention vector is obtained by 
$g_i=\sum_{j=1}^m a_{ij} s_j$, 
which will be combined with $h_i$ and fed into the projection layer. Accordingly we learn parameters $W_g\in\mathbb R^{d_h \times d_h}$, $W_{cc}\in\mathbb R^{1x2d_h}$,  tensor $W\in\mathbb R^{d_h \times k \times d_h}$ $V\in\mathbb R^{k \times 2d_h}$, $b\in\mathbb R^{k}$,
and $U\in\mathbb R^{1 \times k}$.

The four different attention mechanisms aim at different purposes. While {\it dot} and {\it general} aim at discovering the similarities between source and target, the last one focuses more on the non-linearity interaction between words, as pointed out in recursive neural network literature \cite{socher2013recursive}. More comparisons and analysis will be discussed in sections \ref{application}.


\subsection{Likelihood Estimation}\label{likelihood}

The above seq2seq model is capable of giving an estimate of the likelihood of a target sequence, $Tgt=w_{t_1}...w_{t_n}$, given a source sequence $Src$. First, notice the chain rule for conditional probability, we have
\[Pr(Tgt|Src)=Pr(w_{t_1}|Src)\times\cdots\times Pr(w_{t_n}|w_{t_{n-1}},..,w_{t_1},Src)\]
For the $i$-th word in the target sequence, $w_{t_i}$, the conditional distribution $Pr(w_{t_i}|w_{t_{i-1}},..,w_{t_1},Src)$ is estimated by the seq2seq model as in Equation (\ref{eq:word_likelihood}), 
\[\widehat{Pr}(w_{t_i}|w_{t_{i-1}},..,w_{t_1},Src)=v_{i}(w_{t_i})\]
Combine the chain rule step with the seq2seq estimator, we obtain the estimated sequence likelihood, which is 

\begin{equation}
Pr(Tgt|Src)=\prod_{i=1}^n v_{i}(w_{t_i})
\label{eq:likelihood}
\end{equation}


\section{Information-Directed Adaptive Sequence Sampling}
In the above discussions, we focused on deep learning models and attention mechanisms. However, its ability was mostly investigated in traditional, non-adaptive and one-shot inference scenarios. By non-adaptive and one-shot, we mean that the data are given to the algorithm {\em as is}, with no control over the data collecting process whatsoever. Most machine learning algorithms are designed to cope with this scenario, but the rise of new interactive channels like chatbot, virtual agents or interactive webpages demand further. An agent, like a chatbot, has to have the adaptivity of talking and raising clarifying questions to a user to reach his or her goal.
So our framework is created to address this. By {\em adaptive}, it means that it is able to dynamically sample the next user input depending on the current estimates, hence will be more directional and less ad-hoc. In other words, it should interpret user intent and knowingly guide the user to achieve the goal in the most efficient way. Next we will explain how \SYS integrates the seq2seq model to do the estimation, identify the next sampling direction, and make recommendations when the agent is confident.

\subsection{Recommending an Item}
Consider a scenario where we would like to make recommendations. Denote $\pi$ as a prior distribution on the set of all possible items. In this setting, each $Item$ can be represented as a sequence, for example the title of an ad. Now, suppose $k$ input sequences $Input_1^k$ from a user are revealed, e.g. from $k$ rounds of interactions, the posterior distribution on the set of items should change accordingly, reflecting the fact that more information is provided by the user.  Applying the Bayes rule, we have
\[Pr(Item|Input_1^k) = \frac{\pi(Item)Pr(Input_1^k|Item)}{\sum_{Item} \pi(Item)Pr(Input_1^k|Item)}\]
Under a na\"ive Bayes framework, we assume conditional independence, 
$Pr(Input_1^k|Item)=\prod_{i=1}^k Pr(Input_i|Item)$.
Combining the two expressions, the update rule becomes 
\begin{equation}
Pr(Item|Input_1^k) = \frac{\pi(Item)\prod_{i=1}^k Pr(Input_i|Item)}{\sum_{Item} \pi(Item)\prod_{i=1}^k Pr(Input_i|Item)}
\label{eq:posterior}
\end{equation}
where we notice that each likelihood term, $Pr(Input_i|Item)$, is given by the seq2seq likelihood estimator in Equation (\ref{eq:likelihood}).

\subsection{Entropy as a Measure of Confidence}
\subsubsection{Definition and Discussion of Intuition}
Entropy is a functional of a probability distribution, which measures how unpredictable the distribution is. We use it to determine the confidence of an agent, or how vaguely the situation is to the agent. It originated from information theory which quantifies the compressibility of a IID random source sequence \cite{cover2012elements}, but has since been widely applied to other fields, including computer vision and speech recognition. For example, the maximum entropy principle, first proposed by Hoch and Skilling \cite{skilling1984maximum} \cite{hoch1996maximum}, has shown extreme success in image reconstruction and de-blurring. The max entropy principle has found applications in speech recognition, where an example is a speech recognition system \cite{peters2006speech} built by Peters et. al. In NLP, language models, as in \cite{khudanpur1999maximum}, are sometimes built around this idea as well. We also point out that in NLP, the notion of {\it perplexity}, a standard metric used to compare statistical language models and machine translation such as in \cite{sutskever2014sequence}, can be viewed as the exponent of the entropy. Below we give a formal definition of the entropy functional. Notice in the following definition of conditional entropy, it is not averaged across the random variable it conditions on, hence is itself a random variable.

\begin{definition}[Entropy, Conditional Entropy]
Given a pair of discrete random variables $(X,Y)$, where $X$ takes values from a alphabet $\mathcal X$ and $Y$ takes value in $\mathcal Y$. Denote their joint distribution as $p_{X,Y}(x,y)$ and marginals $p_X(x),p_Y(y)$,
\begin{enumerate}
\item
The entropy of $X$ is defined as
\[H(X)=-\sum_{x\in\mathcal X} p_X(x)\log p_X(x)\]
\item
The conditional entropy of $X$ given $Y=y$ is
\[H(X|Y=y)=-\sum_{x\in\mathcal X} p_{X|Y}(x|y)\log p_{X|Y}(x|y)\]
Finally, we use $H(X|Y)=\sum p_Y(y)H(X|Y=y)$ to denote the {\em expected} conditional entropy of $X$ given $Y$.
\end{enumerate}
\end{definition}
In general, a large entropy is an indication of the distribution being more widespread. For example, when entropy is maximized, $X$ has a uniform distribution. On the contrary, when entropy is small, the distribution is more concentrated. $H(X)=0$ effectively means the distribution is deterministic. 

\subsubsection{Uncertainty of Sequence Posterior Estimation}
The conditional entropy can serve as an uncertainty measure of the estimated sequence posterior distribution. 
Remember in Equation (\ref{eq:posterior}), we discussed the posterior update procedure when $k$ user inputs, $Input_1^k$ are observed. We define the {\it posterior uncertainty} as the entropy of this conditional distribution, $H(Item|Input_1^k)$. A large posterior uncertainty means the estimation is vague, hence more observations are needed before a decision can be made; on the other hand, a posterior uncertainty close to 0 is an indication of the estimation has pretty much converged to its argmax, under which case a sure recommendation is ready to be made. Next we explain how to sample more observations or determine the best question to ask if it's uncertain.

\subsection{Information-directed Sampling: Principle of Maximizing Expected Information Gain}
\subsubsection{Mutual Information}
Originated from information theory, the mutual information quantifies how much information can be reliably communicated through a channel. It is a functional on a pair of random variables $(X,Y)$, which is a measure of how much knowledge one can gain of $X$ when $Y$ is revealed. It is defined as the difference between the entropy of $X$ and the conditional entropy of $X$ given $Y$.
\begin{definition}[Mutual Information]
The mutual information between $(X,Y)$ is
\[I(X;Y)=H(X)-H(X|Y)\]
Similarly, conditioning on a sequence of random variables $Z_1^k=z_1^k$, the mutual information between $(X,Y)$ is
\[I(X;Y|Z_1^i=z_1^i)=H(X|Z_1^i=z_1^i)-H(X|Y,Z_1^i=z_1^i)\]
\end{definition}

\subsubsection{Information-Directed Sampling Algorithm}
Now suppose the agent is able to proactively interact with the user, being able to ask the user with questions and expects answers from the user. To start with, assume there is a set of questions, $\mathcal Q=\{Qst_1,\cdots, Qst_q\}$. Following the maximizing information gain principle, we propose Algorithm \ref{alg:algo1}.

\begin{algorithm}
\caption{Information-directed Sequence Sampling}
\label{alg:algo1}
\begin{algorithmic}[1]
\For{ $n=1,2, \cdots$}
 \State The $n$-th sequence $Input_n$ is collected from the user.
 \State Estimate the likelihood, $Pr(Input_n|Item)$, using the seq2seq likelihood estimator.
 \State Update the posterior distribution
 \[Pr(Item|Input_1^n) = \frac{\pi(Item)\prod_{i=1}^n Pr(Input_i|Item)}{\sum_{Item} \pi(Item)\prod_{i=1}^k Pr(Input_i|Item)}\]
 \State Calculate the conditional entropy $H(Item|Input_1^n)$
 \If{$H(Item|Input_1^n)<T$}
  \State Return $\argmax Pr(Item|Input_1^n)$,  the most likely item.
 \Else
  \State Choose $Qst$ that maximizes $I(Qst;Item|Input_1^n)$
  \State Propose $Qst$ to user; wait for user feedback $Input_{n+1}$
 \EndIf
 \EndFor
\end{algorithmic}
\end{algorithm}

We would like to point out that the Algorithm \ref{alg:algo1} which maximizes the expected information gain at each step, is effectively a greedy uncertainty-reduction algorithm. This observation is stated in the lemma below. 
\begin{lemma}
The information-gain maximizing $Qst$ proposed at step $n$ is also a uncertainty minimizer at step $n$.
\end{lemma}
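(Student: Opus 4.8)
The plan is to exploit the elementary identity relating conditional mutual information to conditional entropy, namely $I(Qst;Item\mid Input_1^n) = H(Item\mid Input_1^n) - H(Item\mid Qst, Input_1^n)$, which is exactly the definition of mutual information given above with the conditioning sequence $Z_1^i$ instantiated as the already-observed inputs $Input_1^n$. The crucial observation is that the first term on the right, $H(Item\mid Input_1^n)$, is the \emph{current} posterior uncertainty: it is completely determined by the inputs already collected and does not depend on which question $Qst\in\mathcal Q$ the agent decides to ask next. Consequently it behaves as an additive constant in the optimization over $Qst$.

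First I would state this identity and fix the observed value $Input_1^n=input_1^n$ so that every quantity in play is a deterministic function of $Qst$. Then, since $\argmax_{Qst}\bigl(C-f(Qst)\bigr)=\argmin_{Qst} f(Qst)$ for any constant $C$ independent of $Qst$, it follows immediately that
\[
\argmax_{Qst\in\mathcal Q} I(Qst;Item\mid Input_1^n) \;=\; \argmin_{Qst\in\mathcal Q} H(Item\mid Qst, Input_1^n).
\]
The right-hand side is precisely the \emph{expected} posterior uncertainty the agent would face at step $n{+}1$ after asking $Qst$ and receiving the (as-yet-unknown) answer $Input_{n+1}$: by the definition of expected conditional entropy, $H(Item\mid Qst, Input_1^n)=\sum_{a} Pr(Input_{n+1}=a\mid Input_1^n)\,H(Item\mid Input_1^n, Input_{n+1}=a)$, the average of the step-$(n{+}1)$ posterior uncertainties weighted by the predictive distribution over the user's response. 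Hence the information-gain maximizer is exactly the minimizer of the expected uncertainty at the next step, which is the claim.

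I would close with the remark that, because conditional mutual information is always non-negative, $H(Item\mid Qst, Input_1^n)\le H(Item\mid Input_1^n)$ for every $Qst\in\mathcal Q$, so asking any question can only weakly reduce the expected posterior uncertainty; this is what justifies reading Algorithm \ref{alg:algo1} as a greedy uncertainty-reduction procedure. The only point that requires care — and it is a matter of precise statement rather than a genuine obstacle — is the meaning of ``uncertainty minimizer'': it refers to minimizing the \emph{expected} conditional entropy over the random user response, not the realized entropy for any particular response, since that response is not available at the moment the question is selected. Once this is made explicit, the proof reduces to the constant-shift argument above.
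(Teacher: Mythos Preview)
Your argument is correct and follows exactly the paper's own proof: write $I(Qst;Item\mid Input_1^n)=H(Item\mid Input_1^n)-H(Item\mid Qst,Input_1^n)$, observe that the first term is independent of $Qst$, and conclude that the maximizer of the mutual information coincides with the minimizer of $H(Item\mid Qst,Input_1^n)$. Your additional remarks on the interpretation of ``expected'' uncertainty and the non-negativity of mutual information are sound elaborations but go beyond what the paper records.
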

\begin{proof}
Note that
\[I(Qst;Item|Input_1^n)=H(Item|Input_1^n)-H(Item|Qst,Input_1^n),\]
Note $H(Item|Input_1^n)$ does not depend on $Qst$, as a result, the maximizer of $I(Qst;Item|Input_1^n)$ is immediately a minimizer of $H(Item|Qst,Input_1^n)$ and vice versa.
\end{proof}
A discussion on the application of a Chatbot and question formulation procedure will be discussed in section \ref{sec:chatbot}. 
\section{Applications}\label{application}
\SYS is versatile in the sense that it has a wide range of applicability, covering inference, ranking, adaptive sampling and decision making. We have built three applications of \SYS, illustrated in figure \ref{three}. In this section, we will elaborate on the details including the design, training, implementation and evaluations of these applications. Note that we build the applications on top of a commercial ``product ads'' search engine, which recommends products if a user inputs a query with product intent.

\noindent
\begin{figure}
\centering
\resizebox{0.8\columnwidth}{!}{%
\begin{tikzpicture}[font=\sffamily,>=stealth',thick,
commentl/.style={text width=3cm, align=right},
commentr/.style={commentl, align=left},]
\node[] (init) {\LARGE User};
\node[right=2cm of init] (recv) {\LARGE Agent};

\draw[->] ([yshift=-0.7cm]init.south) coordinate (fin1o) -- ([yshift=-.7cm]fin1o-|recv) coordinate (fin1e) node[rectangle, fill=green, pos = .5, align=center, above, sloped, draw] {Ingress:\\Query Rewriter}
node[pos = .5, align=center, below, sloped] {{\itshape``tablet TV connector"}};

\draw[->] ([yshift=-.3cm]fin1e) coordinate (ack1o) -- ([yshift=-2.3cm]ack1o-|recv) coordinate (ack1e) 
node[rectangle, fill=green, pos = .4, align=center, right, draw] {Processing:\\Info. Directed \\Decision Maker}
node[pos = .4, align=center, left] {``Recommend or\\ ask more questions?};

\draw[->] (ack1e-|recv) coordinate (fin2o) -- ([yshift=-.7cm]fin2o-|init) coordinate (fin2e) node[rectangle, fill=green, pos = .5, align=center, above, sloped, draw] {Egress:\\ Relevance Scoring};

\draw[->] ([yshift=-.3cm]fin2e) coordinate (ack2o) -- ([yshift=-.7cm]ack2o-|recv) coordinate (ack2e) node[pos=.5, above, sloped] {Query (If needed)};

\draw[thick, shorten >=-1cm] (init) -- (init|-ack2e);
\draw[thick, shorten >=-1cm] (recv) -- (recv|-ack2e);

\draw[dotted] (recv.285)--([yshift=2mm]recv.285|-fin1e) coordinate[pos=.5] (aux1);

\draw[dotted] (init.255)--([yshift=2mm]init.255|-fin1o);

\draw[dotted] ([yshift=1mm]init.255|-fin2e) --([yshift=-5mm]init.255|-ack2e) coordinate (aux2);

\node[commentr, right =2mm of ack2e] {\textbf{...}};
\node[below left = 0mm and 2mm of init.south, commentl]{\textbf{INITIAL QUERY}\\[-1.5mm]{\itshape ``How do I connect tablet to TV?"}};
\node[below left = -1mm and 2mm of aux2-|init, commentl]{\textbf{...}};
\end{tikzpicture}
}
\caption{Three applications (in green) of \SYS}
\label{three}
\end{figure}
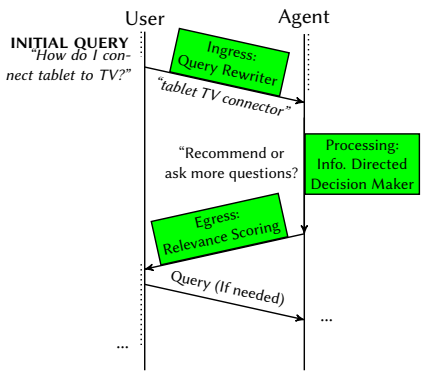

\subsection{Ingress: Query Rewriting}\label{sec:rewriting}
Query understanding and rewriting is a vital pre-processing step for modern recommendation and information retrieval systems. In the area of product ads recommendation, a user input query in the search engine should be able to 1) trigger an ad recommendation action and 2) return the relevant ads. If the query is in a standard form, like it's grammatically correct, and contains the right keywords, the backend information retrieval system will be able to return the recommendations, and the response time must be short to ensure a high quality of service.  In the first application, we apply \SYS for query rewriting. 

\subsubsection{What \& How: Question Understanding}
A pain point we identified about product ad recommendation in our search engine is that it does not process queries in question form well. These queries are often ambiguous, and the product is implicitly referred to, usually formulated in a relationship to other entities. As an example, a user might type in the search box a question like 
\[ How\ to\ connect\ my\ tablet\ to\ TV?\]
From a human point of view, this query clearly points to a product: micro HDMI cable. However, this posts a challenge to the information retrieval system, as no clear keywords related to the right product ad were present in the query. 

\subsubsection{Training and Data}
\label{sec:rewrite_data}
We trained \SYS to generate standard queries from question-form queries. To serve this purpose, we used data collected from a ``related searches" feature on a commercial search engine. The related searches are a list of queries being recommended to a user when a specific query is typed in the search box, and many of them are standard queries. We picked user-input queries starting with ``what" and ``how", and regard a related search query as a positve training example if it was clicked by the user. The click behavior by a user confirms that the standard query is indeed relevant to the question the user has entered. By doing so, we were able to collect a dataset consisting of 12 million clicked (question-form query, standard query) pairs. To further focus on questions that will end up with product ad recommendation, we filter the dataset by keeping only the pairs where there was product ad recommendation for the standard query itself. A total of 782 thousand such training pairs were collected. We summarize the statistics of the training dataset in Table \ref{tb:rewrite_train}. 

\begin{table}
\centering
\begin{tabular}{ l | c c c c}
 & size & vocabulary & average length & clicks \\
 \hline
questions & 316K & 126K 
& 5.5 & 782K \\
queries & 481K & 870K 
& 2.8 & 782K
\end{tabular}
\caption{Statistics of the Rewrite Training Set}
\label{tb:rewrite_train}
\end{table}

\begin{table}
\centering
\begin{tabular}{l | l| l }
\# questions & \# queries & \# pairs  \\
\hline
34K & 42K & 45K 
\end{tabular}
\caption{Statistics of the Rewrite Test Set}
\label{tb:rewrite_test}
\end{table}

\subsubsection{Details of Model}
We used the model in section \ref{sec:seq2seq} with vocabulary size $|\mathcal V|=100k$ for both the encoder and decoder. Any word not in $\mathcal V$ is assigned with symbol $\langle$UNK$\rangle$. %
We chose the embedding dimension $d_{emb}=100$. We used 3-layer LSTMs with hidden vector size $d_h$=300 on the decoder side, and we implemented 4 different attention scenarios as in Section \ref{sec:atten}. The results for the four different attention mechanisms are compared. The model rewrites to a sequence of words as follows. 
At step $i$ at the decoder, the model picks the most likely word and use it as the input to the embedding layer at step $i+1$, until the max length is reached, or an $\langle$EOS$\rangle$ token is encountered. We used Theano \cite{2016arXiv160502688short} for model training on a Tesla K20 GPU, with cross entropy as the loss function, and Adadelta \cite{zeiler2012adadelta}, a variant of Adagrad \cite{duchi2011adaptive}, for gradient descent. We do end-to-end training to learn all the parameters described in Section \ref{sec:seq2seq}, with a total of 10 epochs.

\subsubsection{Result and Evaluation}

In our experiments, we found \SYS's rewriting helped in two ways. First, while many original queries are product related, they did not trigger product ads, due to the form in which the queries are presented, or their implicitness. After being rewritten, they become more keyword-like and trigger product ads. 
Some of such examples are
\begingroup\makeatletter\def\f@size{7}\check@mathfonts
\begin{align*}
\text{How to connect my tablet to TV}&\rightarrow\text{ tablet tv connector}\\
\text{How to repair my broken iphone screen}&\rightarrow\text{ iphone screen replacement}\\
\text{How to charge my iphone}&\rightarrow\text{ iphone charger}\\
\text{How to protect my iphone screen}&\rightarrow\text{ iphone screen protector}
\end{align*}
\endgroup

Secondly, rewriting also helps in retrieving the correct ads, especially when implicit or complex relations are present in the query. To provide an explicit example, the query ``How to wire car radio" indicates, from the human understanding perspective, that the user has the radio already and is looking for wiring products. When submitted in the original form, ads on car radios are retrieved. After \SYS rewrites it to ``radio wiring", the correct ads (radio wiring harness) are retrieved. Another example is the query ``How to fix gps in car", where in its original form it triggers ad about mobile GPS, and after rewriting, the correct ads, GPS holders are returned. 

As a quantitative evaluation, we test on a different (question-form query, standard query) test dataset. The test set was collected using the same procedure described in Section \ref{sec:rewrite_data}, but sampled from log in a different time period. In addition, any pair appearing in the training dataset was removed from the test set. We summarize the statistics of the test dataset in Table \ref{tb:rewrite_test}.

\subsubsection{Quality of Rewrites}
For each pair in the test set, we generate rewrites using different \SYS model variations. We evaluate each rewrite against the standard query as baseline using BLEU score \cite{Papineni:2002:BMA:1073083.1073135}. BLEU (bilingual evaluation understudy) is an algorithm for evaluating the quality of text which has been machine-translated from one natural language to another. We borrow the same technique to evaluate query rewriting since the BLEU score is a standard evaluation for seq2seq model-based translation. We summarize the average BLEU score results in Table \ref{tb:BLEU}. We can clearly see that attention mechanisms consistently outperform the base model without attention. This observation is different from several recent attempt in applying seq2seq model for question-answering like \cite{DBLP:journals/corr/VinyalsL15}, which stated that attention mechanism is not helpful. 
If we reflect based on the examples shown above, this can be explained by the source-target sequence alignment characteristic of our question {\em rewriting} application, a property that machine translation shares but not question {\em answering}.
Among the attention mechanisms, general attention performs the best while concat attention is the worst. Strictly speaking concat attention does not do alignment directly using the hidden vectors of source and target words. On the other hand, dot and general attention does exactly that. Lastly, the tensor mechanism ranks the $2^{nd}$, only a bit worse than general attention. We suspect its structure is too complicated to learn when combined against recurrent neural networks.

\subsubsection{Quality of Ad Recommendation}
We evaluate the quality of ad recommendation using the rewrites as input to the ad system.
We first sampled 1000 question queries from the test pairs, and generate rewrites using different \SYS model variations. Then, we submit the rewrites to product ads search engine. The batch submission effort is usually called ``scraping'' in industry. We also scrape the system with the original question queries and the standard query respectively, in order to compare the ads coverage and quality. The results are presented in Table \ref{tb:scrape}. 
The ads coverage is defined by \% of questions having ads returned. 
For ads quality, we sample 3000 (original question,  ad) pairs for each version of rewrites and their returned ads. Each pair is labeled by a group of trained human judges according to the relevance between the query and the ad. Each label ranges in \{bad, fair, good, excellent\}, and we consider \{fair, good, excellent\} as positive. Note that along with the recommended ads, we submit the ``original question'' to judges. Judges are only comparing the original question to the returned ad, without knowing the ads are actually retrieved using rewrites. The ads quality is based on \% positive labeled ads in each 3000-pair set. 

In Table \ref{tb:scrape}, we see that only 21.0\% of the original questions triggered product ad recommendations. 21.1\% of the returned ads are of reasonable quality. If we scrape with the related search queries collected from the log, we see much higher ad coverage at 84.7\%. This is expected as we already filter the test set this way. More importantly, we see even better quality ads at 25.3\%. This confirms the validity of our rewrite data collection method. The clicked related search queries are indeed relevant to the questions so that, the ads returned using the rewrites are similarly relevant compared to scraping with the questions themselves. 

Among \SYS's rewrites, we see that using {\it general} attention achieves both the highest coverage and quality. We see a 3.5x increase in coverage and a $50\%$ increase in quality, relative to the original query. Even when compared with the unobserved ground-truth, i.e., the clicked related search queries, we see only a $10.7\%$ decrease in coverage but a $12.4\%$ increase in quality. 

\begin{table}[t]
\centering
\begin{tabular}{ l | c }
 Model & BLEU Score  \\
 \hline
\SYS rewrites without attention 
& 0.326  \\
\SYS rewrites with dot attention
& 0.349 \\  
\SYS rewrites with general attention
& {\bf 0.388} \\  
\SYS rewrites with concat attention
& 0.331 \\  
\SYS rewrites with tensor attention
& 0.364 \\  
\end{tabular}
\caption{BLEU scores between rewrites and standard queries}
\label{tb:BLEU}
\end{table}

\begin{table}[htb]
\centering
\resizebox{\columnwidth}{!}{%
\begin{tabular}{ l | c c c }
 Scrape Set & Ads Coverage 
 & Ads Quality  \\
 \hline
Original questions & 21.0\% 
& 21.1\%  \\
Related search queries & 84.7\%  
& 25.3\% \\
 \hline
\SYS rewrites without attention & 67.4\% 
& 26.7\%  \\
\SYS rewrites with dot attention & 64.2\% 
& 33.0\% \\  
\SYS rewrites with general attention & 74.0\% 
& {\bf 37.7\%} \\  
\SYS rewrites with concat attention & 64.1\% 
& 18.2\% \\  
\SYS rewrites with tensor attention & 64.1\% 
& 28.8\% 
\end{tabular}
}
\caption{Scraping results}
\label{tb:scrape}
\end{table}

\subsubsection{Discussion}
The results indicate that doing ``question''-rewriting with appropriate training data achieves improved recommendation quality and coverage. This staged approach can be seamlessly integrated into current  infrastructure. It does not require any change in the existing information retrieval system, as the rewritten query can be submitted either instead of or along with the original one.
In addition,  targeting only ``what'' and ``how'' questions is just the first step towards a general-purpose question-answering system. Readers can imagine that this application would be part of a large-scale, comprehensive system, where this application only focuses on product recommendation.  Lastly, one may argue that although a significant improvement is observed, the reported ads quality is still not high. This leads to the next section using seq2seq for scoring and keeping better candidates.


\subsection{Egress: Relevance Scoring}\label{sec:scoring}
\SYS's ability of estimating items' posterior distribution also makes it a good fit for quality control at the egress side. When a set of ads are returned from the information retrieval infrastructure, \SYS can serve as a relevance filter which shows only the most related ads to the user. 


\subsubsection{Training and Data}
The \SYS scoring model was trained on our internal dataset, which consists of clicked (query, ad) pairs sampled from a commercial product ad search engine. The ads come from a product ad database, each is a sequence of words describing the corresponding product. The queries are user inputs in our search engine, and if the user clicked on an ad when searching with a query, we regard it as a positive (query, ad) pair. A total of 15 million clicks are sampled from a month long of click logs, which ends up with 6.4 million distinct user queries and 5.1 million distinct ads. We summarize the statistics of the training dataset in Table \ref{tb:data}.

\subsubsection{Details of Model}
We used the model in Section \ref{sec:seq2seq} and chose a vocabulary size $|\mathcal V_q|=60k$ on the query side and $|\mathcal V_d|=100k$ on the ad side. Any word that is not part of $\mathcal V$ is assigned with symbol $\langle$UNK$\rangle$. We chose the embedding dimension $d_e=150$ on both encoder and decoder sides, and hidden dimension $d_h$=300 on the decoder side.
Although we did notice an improvement in performance for deeper networks, in this experiment we trained a single layer LSTM model for a fair comparison noted below. We trained the model for 5 epochs.


\begin{table}
\centering
\begin{tabular}{ l | c c c c}
 & size & vocabulary & average length & clicks \\
 \hline
query & 6.4M & 68K & 4.1 & 15M \\
ads & 5.1M & 114K & 9.3 & 15M
\end{tabular}
\caption{Statistics of the Scoring Training Set}
\label{tb:data}
\end{table}

\begin{table}[t]
\begin{tabular}{l | l| l | l | l}
\# queries & \# ads & \# pairs & \# positive & \# negative \\
\hline
23K & 915K & 965K & 234K & 731K
\end{tabular}
\centering
\caption{Statistics of the Scoring Test Set}
\label{tb:testset}
\end{table}

\subsubsection{Evaluation}

As a quantitative evaluation, we test on a fully annotated test set. The test set contains around 966 thousand (query, ad) pairs where each pair is labeled by a group of trained human judges according to the relevance between the query and the ad. Each label ranges in \{bad, fair, good, excellent\}. The pairs are sampled from the early selection stage of a commercial ads search engine, where there are a significant amount of low quality selected ads to be pruned out in downstream processing. We use AUC (area-under-curve of the receiver operating characteristic plot) as the metric for evaluation, 
by considering the good and excellent labels as the positive class and the rest labels as the negative class. This results in a test set consisting of 234 thousand positive and 731 thousand negative pairs. 
We briefly summarize the statistics of the testset in Table \ref{tb:testset}. We use a uniform prior $\pi$ on the set of ads, and hence $Pr(ad|query) \propto Pr(query|ad)$ as in Equation (\ref{eq:likelihood}). As a result, $Pr(query|ad)$ serves as the relevance score for a pair $(query, ad)$. The AUC is then computed according to the scores for all the pairs in the test set.

We compare against existing relevance scoring baselines, including the popular CDSSM \cite{shen2014learning} and DeepIntent \cite{zhaideepintent}, both of which are deep-learning based and have shown very satisfactory results in production. Given a (query, ad) pair, they encode the query and the ad seperately into two vectors, and then calculate cosine similarity directly from these two vectors as the relevance score. We trained CDSSM, DeepIntent and \SYS on the same training dataset, and evaluated the performance by comparing the AUCs on the same testset. 
Our results are presented in Table \ref{tb:auc}.

\begin{table}[t]
\centering
\resizebox{\columnwidth}{!}{%
\begin{tabular}{lclc|l}
  Implmentation & Decoder & Encoder Architecture & Encoder Embedding &  AUC  \\
 \hline
 (a) CDSSM & None & Conv / max pooling & Tri-letter hash  & 0.726 \\
 (b) DeepIntent & None & Conv / max pooling & Word-based   & 0.728 \\
 (c) DeepIntent & None & BLSTM / last pooling & Word-based   & 0.798\\
 (d) \SYS & Yes & BLSTM / last pooling & Word-based   & {\bf 0.840}
\end{tabular}
}
\caption{AUC scores of different Scoring Frameworks}
\label{tb:auc}
\end{table}

Note both CDSSM and DeepIntent methods only use encoders, unlike in \SYS there're both encoder and decoder components.  So to  make a fair comparison, it becomes necessary to keep the encoder setting as similar as possible, say the encoder architecture, encoded vector size, and depth of recurrent neural networks. The vector size is easy to do and we set it to 300 across the models. We also train all models with depth = 1, and set word-embedding size to 150 if applicable. Below we discuss the different encoder architectures and their AUC performance. We avoid using attention in \SYS to keep the comparison fair and simple: 

1) DeepIntent with BLSTM resembles \SYS the most, i.e. they have the same encoder architecture. They both start with a word-based embedding layer, leverage BLSTM to compute a sequence of hidden vectors, and take the last vector as the final encoded vector. So comparing this against \SYS can fairly show the gain by having a decoder. In Table \ref{tb:auc}, we see \SYS achieves 0.84 AUC, as shown in row (d), outperform this baseline with 0.798 AUC shown in row (c).

2) Rather than using BLSTM, at the encoder side, CDSSM uses a convolutional (Conv) layer. The Conv layer aggregates tri-letter-based word-hash vectors via a sliding window. The output is a sequence of vectors which gets further reduced to a final encoded vector with max pooling.  In CDSSM's implemenation, it also has a fully connected layer to reduce the size of final encoded vector for online performance reason. To make a fair comparison, we set both the internal hidden vector size and final encoded vector size to 300. In Table \ref{tb:auc}, we see CDSSM implementation achieves far worse AUC score of 0.726 in row (a).

3) With CDSSM being so different in encoder, namely the tri-letter-based embedding and the Conv layer, we modified DeepIntent implemenation to use Conv layer, in order to understand where the loss in AUC comes from. Specifically, we would like to know whether it is from the different embedding or recurrence layer. After using DeepIntent with Conv layer, the AUC of this implementation achieves only 0.728 AUC, as shown in row (b) of Table \ref{tb:auc}, similar as CDSSM implementation in row (a). It is strongly suggested, by comparing (b) against (c), that BLSTM-based encoder outperforms Conv-based encoder.

In summary, \SYS not only provides scores allowing probabilisitc interpretation, but achieves better performance than similiarity-based scoring methods namely CDSSM and DeepIntent.
 
\subsubsection{Discussion}

We would like to briefly discuss the cost of computation and implementation, and point out why \SYS is capable of being a good relevance filter. First, from a practical point of view, having \SYS acting on top of the existing information retrieval system requires no modification to the infrastructure, minimizing implementation cost. Second, from the computation point of view, when used as a relevance scoring method, \SYS needs to calculate a score for each (query, $\text{ad}_i$) pair for all ads requesting a relevance scoring. As a result, the cost of computation grows linearly with the number of ads. This is also the reason we do not use \SYS to directly search through the entire ads database for the most relevant ones, as the hundreds of millions of ads in the database can make the search process too long to guarantee service quality. 
When used as the egress control of a information retrieval engine, however, the number of returned ads are limited; usually at the scale of tens. Moreover, batching and hierarchical softmax can further reduce the computation time required.  

\noindent
\begin{figure}
\centering
\resizebox{\columnwidth}{!}{%
\begin{tikzpicture}[font=\sffamily,>=stealth',thick,
commentl/.style={text width=3cm, align=right},
commentr/.style={commentl, align=left},]
\node[] (user) {\LARGE User};
\node[right=3cm of user] (bot) {\LARGE Bot Agent};
\node[right=3cm of bot] (ir) {\LARGE IR System};

\draw[->] ([yshift=-0.3cm]user.south) coordinate (fin1o) -- ([yshift=-.3cm]fin1o-|bot) coordinate (fin1e) node[rectangle, pos = .5, align=center, above, sloped, draw] {Query Rewriter}
node[pos = .5, align=center, below, sloped] {{\itshape``tablet TV connector"}};

\draw[->] ([yshift=-0.5cm]bot.south) coordinate (fin3o) -- ([yshift=-0.5cm]fin3o-|ir) coordinate (fin3e) node[rectangle, pos = .5, align=center, above, sloped, draw] {Submit to IR}
node[pos = .5, align=center, below, sloped] {{\itshape``tablet TV connector"}};

\draw[->] ([yshift=-0.4cm]fin3e-|ir) coordinate (fin2o) -- ([yshift=-0.7cm]fin2o-|bot) coordinate (fin2e) node[rectangle, pos = .6, align=center, above, sloped, draw] {Returned from IR}
node[pos = .5, align=center, below, sloped] {[HDMI, micro HDMI, VGA, ...]};

\draw[->] ([yshift=-0.4cm]fin2e) coordinate (ack1o) -- ([yshift=-1.0cm]ack1o-|bot) coordinate (ack1e) 
node[rectangle, pos = .4, align=center, right, draw] {Estimate Posterior \\Distribution}
node[pos = .4, align=center, left] {[HDMI (Pr=0.5), \\micro HDMI (Pr=0.4), \\VGA (Pr=0.1), ...]};

\draw[->] ([yshift=-0.1cm]ack1e) coordinate (ack2o) -- ([yshift=-1.4cm]ack2o-|bot) coordinate (ack2e) 
node[rectangle, pos = .4, align=center, right, draw] {Information Based \\ Decision Making \\ Estimate Entropy}
node[pos = .35, align=center, left] {Confident, Recommend \\ \itshape`` Do you like this HDMI cable?"}
node[pos = .75, align=center, left, yshift=-0.4cm] {Not Confident,\\ Ask M.I. maximizing question \\ \itshape`` What size do you want?"};

\draw[->] ([yshift=-0.5cm]ack2e-|bot) coordinate (fin4o) -- ([yshift=-.3cm]fin4o-|user) coordinate (fin4e);

\draw[->] ([yshift=-0.1cm]fin4e) coordinate (ack3o) -- ([yshift=-1.0cm]ack3o-|user) coordinate (ack3e) 
node[rectangle, pos = .4, align=center, left, draw] {User Receives Recommendation\\ or User Receive Question}
node[pos = .4, align=center, right] {\itshape ``I want micro sized."};

\draw[->] ([yshift=-0.0cm]ack3e) coordinate (ack5o) -- ([yshift=-2.0cm]fin4o-|bot) coordinate (fin5e) node[rectangle, pos = .5, align=center, above, sloped, draw] {Query Rewriter}
node[pos = .5, align=center, below, sloped] {{\itshape``micro size"}};

\draw[thick, shorten >=-1cm] (user) -- (user|-ack3e);
\draw[thick, shorten >=-1cm] (bot) -- (bot|-ack3e);
\draw[thick, shorten >=-1cm] (ir) -- (ir|-ack3e);

\node[commentr, right =2mm of fin5e] {\textbf{...}};
\node[below left = 0mm and 2mm of user.south, commentl]{\textbf{INITIAL QUERY}\\[-1.5mm]{\itshape ``How do I connect tablet to TV?"}};
\node[below left = 15mm and 2mm of aux2-|user, commentl]{\textbf{...}};
\end{tikzpicture}
}
\caption{ChatBot Design}
\label{fig:chatbot}
\end{figure}
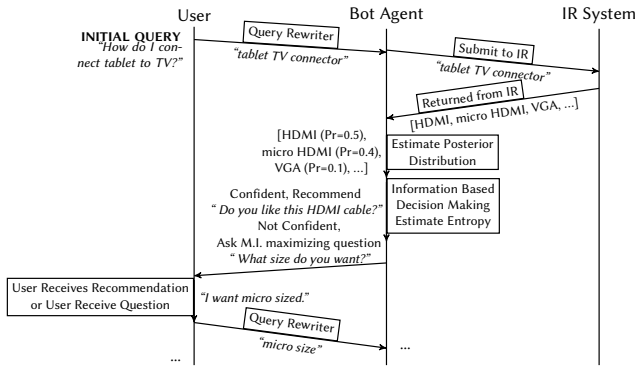

\subsection{Chatbot: Information Directed Conversation and Recommendation}\label{sec:chatbot}

The last application we introduce is a chatbot that specializes in product ad recommendation. Virtual agents and chatbots have gained popularity due to its user-friendliness and interactiveness. They not only offload some of the jobs of search engines, but also create new user interaction entry points \cite{aron2011innovative}. 
Below we explain the flow of interaction with examples. 



\subsubsection{Flow of Dialog and System Behaviors}
The interactive session starts with the first query submitted by a user. For example, the user can ask the chatbot ``How do I connect my tablet to TV?". The chatbot then retrieves a initial list of related ads from the information retrieval backend system. To do so, it applies \SYS's rewriting to the question and convert it into a standard query, in this case ``tablet tv connector". This standard query is then submitted to the information retrieval system which returns a list of ads, e.g. ads about HDMI cables, micro HDMI cables, or VGA cables. The chatbot then uses \SYS's posterior distribution estimation to calculate the distribution of the returned ad list, and estimates its corresponding conditional entropy. In the decision-making step, if the conditional entropy is less than some threshold $T$, the top $k$ (3 by default) most relevant ads are returned to the user. Each ad is displayed with a picture, the selling price, the merchant selling it, and embedded with a hyperlink so the user can click on. A user click will redirect the user to the e-commerce web page hosted by the merchant so the user can continue the exploration and make purchase.
Otherwise, the bot asks a conditional mutual-information maximizing question to the user, in this case the question is about the ``size'' of the connector products. For example, ``what size do you want?''. This conversation goes until a final recommendation is made. Figure \ref{fig:chatbot} gives an illustration of the procedure in the form of a timing diagram. 
Next we explain how we formulate such questions.

\subsubsection{Question Formulation}
By the principle of maximizing expected information gain, at each step, if the chatbot is not confident, it is supposed to ask a question that maximized the conditional mutual information. The problem here is, what is the set of questions we are maximizing over for? If we allow arbitrary questions, the chatbot may face issues like 1) the question may be not relevant to the product so is confusing, and 2) the mutual information is difficult to estimate. 

To address this issue, we leverage  the attributes associated with each ad. For example, an ad about a laptop has attributes ``processors", ``RAM size", ``manufacture" and so on. Similarly for clothes, there are attributes like ``color", ``size" and ``material". By formulating questions based on the attributes, the aforementioned issues go away. Firstly, it will be easier for users to relate. Users will have the perception that the chatbot is working with them to narrow down the most relevant product by confirming the attribute info. Secondly, it is straightforward to estimate mutual information along with attribute-based questions.
Notice that attributes only depend on the ads, so $\text{Input}_1^n - \text{Ad} - \text{Attribute}$
forms a Markov chain.

This allows us to estimate mutual information, as the conditional distribution, $Pr(\text{Ad}, \text{Attribute}|\text{Input}_1^n)$, can be calculated by
\[
Pr(\text{Ad}, \text{Attribute}|\text{Input}_1^n)=Pr(\text{Attribute}|\text{Ad})Pr(\text{Ad}|\text{Input}_1^n)
\label{eq:attribute}
\]
where the first factor is estimated by counting and the second factor is directly provided by \SYS's posterior update. After the information-maximizing attribute is identified, a question will be raised and the user input will be collected to update the posterior distribution again. As an example, if the user is looking for a laptop, a question may look like
\[\text{\textit{What manufacture do you like?}} \]

\subsubsection{Implementation and Qualitative Feedbacks}
We built the bot using Microsoft Bot Framework \cite{BotFramework}, which is a chatbot development tool. It supports bot conversation over various platforms, including text messages, Skype, Slack, Messenger, etc.  Figure \ref{fig:screenshot} is a screenshot of the chatbot with Skype as the platform. 

By demonstrating the prototype to a few colleagues, we got a few encouraging feedbacks. Most of them were surprised by the capability of the chatbot in recommending products when they ask related questions. The ``how to connect tablet to tv'' case was also a big win. An HDMI ad was recommended back to a user, and by clicking the ad, the title of the redirected web page popped up:  ``16.4ft Ultra-thin Micro HDMI D to A Long Cable - Connect Tablet / Smart Phone / Mobile / Laptop / Camera to HD TV''. Interested users can take this opportunity to learn more about a Micro HDMI cable (it can connect not only tablet but also other devices to TV) and purchase it! Nonetheless, several colleagues pointed out that this chatbot should not be standalone. In addition to recommending products, we should also integrate with other services to provide tutorial videos for example. Last but not least, a colleague asked whether the chatbot can provide information in other verticals other than products. By explaining how the system works, the colleague understood by training on data from a different vertical, and combining with the corresponding search engine, we can generalize the chatbot to where needed.

\begin{figure}
\centering
\includegraphics[width=0.65\linewidth]{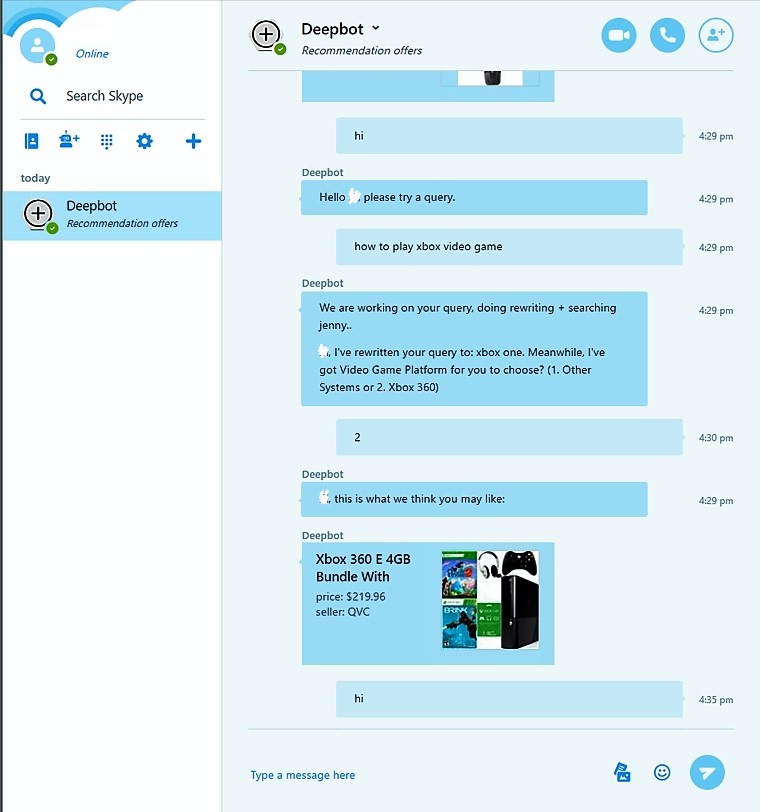}
\caption{Screenshot of the Chatbot}
\label{fig:screenshot}
\end{figure}

\section{Conclusion and Future Work}

In this paper, we introduced \SYS, a sequence-to-sequence model based framework for query understanding, ad recommendation and user interaction. In query rewriting, it significantly increases both the coverage and quality. For relevance scoring, AUC, which is a key metric, surpasses existing systems. It also demonstrated great potential in more efficient user interaction and chatbot design, for which we can rigorously formulate questions to users, based on a principle of maximizing information gain. As an ongoing work, we would like to continue work and experiment on the chatbot, possibly with quantitative experiments for the chatbot. A helpful experiment is that we can measure its efficiency (i.e. number of rounds of interaction) for a user to acquire the information he or she needs. 

\bibliographystyle{ACM-Reference-Format}
\bibliography{learning}

\end{document}